\documentclass[11pt]{article}

\usepackage[
  shownumpages,
  bgcolor={250,248,241},
  braincolor={133,76,101},
  linkcolor={133,76,101},
  citecolor={133,76,101},
  urlcolor={133,76,101},
  citingstyle=authoryear,
  bibliostyle=unsrtnat,
  bibfile=refs
]{Styles/brainlab}

\usepackage{microtype}
\usepackage{mathtools}
\usepackage{bm}
\usepackage{multirow}
\usepackage{tabularx}
\usepackage{longtable}
\usepackage{threeparttable}
\usepackage{subcaption}
\usepackage{enumitem}
\usepackage{wrapfig}

\definecolor{HWPrimary}{HTML}{854C65}
\definecolor{HWSecondary}{HTML}{F0C987}
\definecolor{HWSoft}{HTML}{FBEDD6}
\definecolor{HWBg}{HTML}{FAF8F1}
\definecolor{HWAccent}{HTML}{854C65}

\usepackage{tikz}
\usetikzlibrary{calc,positioning,arrows.meta,shapes,fit,decorations.pathreplacing}
\usepackage{pgfplots}
\pgfplotsset{compat=1.18}
\usepgfplotslibrary{fillbetween}

\graphicspath{{plots/}{./plots/}}

\setlist[itemize]{leftmargin=1.2em,itemsep=0.35em,topsep=0.35em}
\setlist[enumerate]{leftmargin=1.4em,itemsep=0.35em,topsep=0.35em}

\DeclareBrainTcbTheorem{lemma}{Lemma}
\DeclareBrainTcbTheorem{proposition}{Proposition}
\DeclareBrainTcbTheorem{remark}{Remark}
\DeclareBrainTcbTheorem{example}{Example}
\DeclareBrainTcbTheorem{note}{Note}
\newtheorem*{theorem*}{Theorem}

\DeclareMathOperator*{\argmin}{arg\,min}

\setbrainmeta{
  title={Spectral Weight Decay: Inducing Low-Rank Structure in Neural Network Weights},
  authors={
    Dmitrii Andriianov\textsuperscript{1},
    Andrey Veprikov\textsuperscript{1,2},
    Aleksandr Beznosikov\textsuperscript{1,3}
  },
  affiliations={
    \textsuperscript{1}Basic Research of Artificial Intelligence Laboratory (BRAIn Lab)\\
    \textsuperscript{2}SB AI Lab\\
    \textsuperscript{3}Innopolis University
  },
  abstract={
    Standard weight decay treats each weight matrix as a vector and ignores its
    spectral structure. We introduce spectral weight decay, a post-step
    decoupled nuclear-norm update that applies additive rather than
    multiplicative spectral shrinkage. We connect the update to approximate
    proximal descent and show that its sensitivity to update order can exceed
    that of conventional $\ell_2$ weight decay near rank deficiency.
    Across LLaMA models with $124$M to $500$M parameters, spectral
    weight decay lowers effective rank and improves SVD-LLM compression at
    matched validation loss. At $500$M and a $4\%$ distortion budget, it reaches
    $1.89\times$ compression and $1.18\times$ GPU inference speedup, compared
    with $1.14\times$ and $1.01\times$ after standard weight decay. Under
    fixed-horizon training with $60\%$ label noise, it also improves final
    mean clean-test accuracy over matched $\ell_2$ regularization by up to $17.8$
    points on MNIST and $4.6$ points across four BERT-base tasks.
    Code is available at \url{https://github.com/brain-lab-research/SpectralWD}.
  },
}

\begin{document}
\begin{mainpart}

\section{Introduction}
\label{sec:intro}

Optimization is central to deep learning because it governs training stability,
efficiency, and the properties of the learned solution. AdamW has become the
standard optimizer in modern training recipes~\citep{loshchilov2019decoupled}.
At the same time, matrix-aware optimizers such as Shampoo, SOAP, and Muon have
begun to exploit the geometry of neural-network weight matrices instead of
treating each layer as a flat parameter
vector~\citep{gupta2018shampoo,vyas2024soap,jordan2024muon}. Although their
update rules differ, all retain conventional weight decay as a core component.

Weight decay remains part of these training recipes because it improves
generalization~\citep{loshchilov2019decoupled}, preserves downstream
plasticity~\citep{han2026weight}, and stabilizes
training~\citep{dangelo2024why}. AdamW decouples weight shrinkage from
adaptive preconditioning, so coordinatewise scaling of the loss gradient does
not also reshape the decay term. The regularizer itself remains matrix-agnostic:
its squared Frobenius penalty depends on the sum of squared singular values
without explicitly favoring the removal of small ones.

To make the regularizer matrix-aware, we begin with the proximal interpretation
of decoupled weight decay. To first order, its update is a proximal gradient
step for the squared Frobenius
penalty~\citep{parikh2014proximal,zhuang2022understanding}. We replace this penalty with the nuclear
norm, a convex surrogate for rank whose proximal operator soft-thresholds
singular values~\citep{recht2010guaranteed,cai2010singular}. This construction
is the matrix analogue of replacing ridge shrinkage with coefficient selection
by the lasso~\citep{hoerl1970ridge,tibshirani1996regression}. Given the
loss-only optimizer direction $\Delta$, let
$Z=W^{(k)}-\eta\Delta=U_Z\Sigma_ZV_Z^\top$. Spectral weight decay applies the
post-step correction
\begin{equation}
  W^{(k+1)}=Z-\eta\lambda U_ZV_Z^\top,
  \label{eq:swd_decoupled}
\end{equation}
where $\eta$ is the learning rate and $\lambda$ the regularization coefficient.
This correction acts additively in the singular basis of $Z$, whereas
conventional weight decay scales the spectrum uniformly. 

SLORR-Nuc also
approximates the nuclear-norm polar
factor~\citep{gonzalezmartinez2026slorr}. Its decoupled extension would use
the pre-step correction $Z-\eta\lambda U_WV_W^\top$ for
$W^{(k)}=U_W\Sigma_WV_W^\top$, rather than \eqref{eq:swd_decoupled}.

The additive shrinkage acts more strongly on smaller singular values and
concentrates the spectrum. The resulting low-effective-rank weights admit
efficient truncated SVD compression. Replacing a dense $m\times n$ matrix with
rank-$r$ factors reduces its parameter count and multiplication cost from $mn$
to $r(m+n)$. This factorization lowers storage and inference memory and can
reduce latency when the retained rank is sufficiently small. Beyond
compression, we test whether the same low-rank bias limits memorization under
label noise.

We evaluate spectral weight decay in LLaMA~\citep{touvron2023llama} pretraining,
post-training compression, and fixed-horizon noisy-label classification. Across
models with $124$M to $500$M parameters, it lowers effective
rank~\citep{roy2007effective} and improves compression and inference speed at
matched validation loss. At $500$M and a $4\%$ distortion budget, it reaches
$1.89\times$ compression and a $1.18\times$ GPU inference speedup. At $60\%$
label noise, mean clean-test accuracy gains over matched $\ell_2$ regularization
reach $17.78$ points on MNIST and $4.59$ points on four BERT-base tasks.

\begin{samepage}
\paragraph{Contributions}

\begin{itemize}
  \item We formulate spectral weight decay as decoupled nuclear-norm
  regularization, derive its additive singular-value shrinkage, and connect the
  update to approximate proximal descent.
  \item Across LLaMA models with $124$M to $500$M parameters, spectral weight
  decay improves low-rank compression and inference speed over standard weight
  decay at matched validation loss. It also outperforms Cuttlefish and Prehab
  in matched compression comparisons.
  \item On image and text classification tasks, spectral weight decay improves fixed-horizon
  robustness to label noise over matched $\ell_2$ regularization.
\end{itemize}
\end{samepage}


\section{Related Work}
\label{sec:rel_work}

\paragraph{Weight decay.}
Early work explains weight decay as a regularizer that improves generalization
by suppressing irrelevant parameter directions and sensitivity to target
noise~\citep{krogh1991simple}. For SGD, weight decay is equivalent to an
$\ell_2$ penalty up to rescaling the coefficient. This equivalence breaks
under adaptive preconditioning, motivating the decoupled update used by
AdamW~\citep{loshchilov2019decoupled}. Analyses show that its effect
depends on the optimizer, architecture, learning rate, and training horizon.
The benefit can arise through changes in effective learning rate and
optimization dynamics rather than norm control
alone~\citep{zhang2019three,lewkowycz2020training,dangelo2024why}. Weight decay
interacts with matrix structure. In factorized attention layers,
$\ell_2$ regularization of the factors is linked to nuclear-norm regularization
of their product and can induce low-rank key-query and value-projection
maps~\citep{kobayashi2024weight}. This is an instance of a general equivalence:
minimizing the $\ell_2$ penalties of two factors over all factorizations
$W=UV^\top$ is equivalent to penalizing the nuclear norm of
$W$~\citep{srebro2004maximum}. As a convex rank
surrogate~\citep{recht2010guaranteed}, the nuclear norm can instead regularize
each weight matrix directly.

\paragraph{Nuclear-norm regularization.}
Direct nuclear-norm penalties act on the learned weight matrices without fixing
a rank in advance, but exact gradients and proximal updates require spectral
decompositions. \citet{alvarez2017compression} use periodic singular-value
thresholding to prepare networks for post-training SVD compression. SLORR
regularizes dense weight matrices using approximations of the polar
factor~\citep{gonzalezmartinez2026slorr}. Its variants use either the Hoyer
penalty or the nuclear norm. SLORR-Nuc applies
the penalty through the loss. The paper also describes a decoupled pre-step
rule, but evaluates it only for SLORR-Hoyer, not SLORR-Nuc. This rule computes
the polar factor at $W$, while our post-step update \eqref{eq:swd_decoupled}
uses that of $Z$. SLORR develops efficient low-rank regularization. We replace
the squared Frobenius penalty of standard weight decay with the nuclear norm
and analyze the post-step update as approximate proximal descent.
NuMuon instead acts on the optimizer update rather than the weights. It
constrains the nuclear norm of Muon's update direction to promote low-rank
updates~\citep{dolatabadi2026numuon}.

\paragraph{Low-rank training.}
Another line of work imposes explicit low-rank structure during training.
Cuttlefish begins with dense training, waits for the stable ranks of the layers
to converge, and then switches to a factorized model at the corresponding
ranks~\citep{wang2023cuttlefish}. TRP periodically replaces weight matrices
with truncated-SVD approximations, selects the retained rank using a
spectral-energy criterion, and continues gradient updates in the original
parameterization~\citep{xu2020trp}. These projections can be combined with
nuclear-norm regularization. Spectral weight decay instead retains dense
weight matrices throughout training and does not select a hard rank until
compression. Appendix~\ref{app:alternative_methods} compares spectral weight
decay with Cuttlefish.

\paragraph{Post-training low-rank compression.}
SliceGPT removes rows and columns after a principal-component
transformation~\citep{ashkboos2024slicegpt}.
ASVD~\citep{yuan2023asvd} and SVD-LLM~\citep{wang2024svdllm} use activation
statistics to improve the factorization. Dobi-SVD learns layerwise truncation
and reconstructs weights from truncated
activations~\citep{wang2025dobisvddifferentiablesvdllm}. Prehab fine-tunes a
trained model for SVD compression~\citep{qin2025low}.
Appendix~\ref{app:prehab} compares spectral weight decay with
Prehab in this setting. Our main experiments apply it throughout pretraining
and evaluate compression and inference with SVD-LLM.
Appendix~\ref{app:method_choice} compares post-training compression methods.
Section~\ref{sec:experiments} shows the resulting compression and inference gains.


\section{Why Low-Rank Structure Matters}
\label{sec:experiments}
\begin{wrapfigure}{r}{0.45\linewidth}
  \vspace{-\baselineskip}
  \footnotesize
  \begin{algorithm}[width=\linewidth,left=6pt,right=6pt,top=6pt,bottom=6pt,fonttitle=\small]{Adam with spectral WD}
  \label{alg:adam_swd_overview}
  \begin{algorithmic}
    \State $M_0\gets0$, $V_0\gets0$
    \For{$t=1,\ldots,T$}
      \State $G_t\gets\nabla_W\mathcal L(W_{t-1})$
      \State $M_t\gets\beta_1M_{t-1}+(1-\beta_1)G_t$
      \State $V_t\gets\beta_2V_{t-1}+(1-\beta_2)G_t\odot G_t$
      \State $Z_t\gets W_{t-1}-\eta M_t/(\sqrt{V_t}+\varepsilon)$
      \Statex {\setlength{\fboxsep}{2pt}\colorbox{green!18}{\makebox[\dimexpr\linewidth-4pt][l]{$+\quad W_t\gets Z_t-\eta\lambda\,\mathrm{NS}_5(Z_t)$\hfill {\scriptsize $\triangleright\ \mathrm{NS}_5\approx U_ZV_Z^\top$}}}}
      \Statex {\setlength{\fboxsep}{2pt}\colorbox{red!15}{\makebox[\dimexpr\linewidth-4pt][l]{$-\quad W_t\gets Z_t-\eta\lambda W_{t-1}$\hfill {\scriptsize $\triangleright\ \ell_2$ WD $\to$ spectral WD}}}}
    \EndFor
  \end{algorithmic}
  \end{algorithm}
  \vspace{-\baselineskip}
\end{wrapfigure}

Algorithm~\ref{alg:adam_swd_overview} shows how spectral weight decay modifies
Adam by replacing standard pre-step $\ell_2$ decay with the post-step
correction in \eqref{eq:swd_decoupled}. Bias correction is omitted for clarity.
The analogy with $\ell_1$ regularization in linear regression suggests that
this additive spectral shrinkage should promote low-rank structure.
We show that it reduces the effective rank of learned weight matrices and
evaluate the resulting benefits for compression, inference, and robustness
to label noise under fixed-horizon training. Section~\ref{sec:method}
develops the update in detail.

\subsection{Effective rank and why it matters}
\label{sec:erank}

\begin{figure}[t]
  \centering
  \includegraphics[
    width=\textwidth,
    height=0.2281\textwidth
  ]{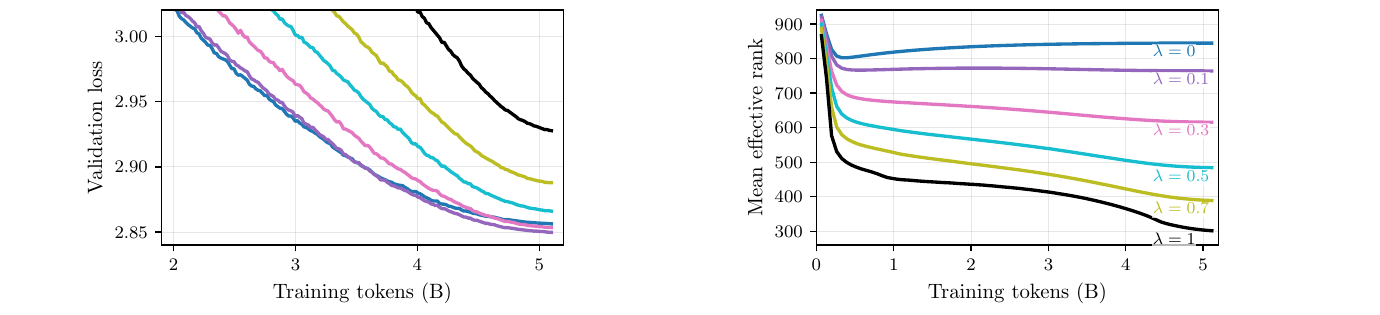}
  \caption{Validation loss (left) and mean effective rank of the weight matrices
  (right) during training of a $257$M-parameter
  LLaMA on FineWeb-Edu with Adam and spectral weight decay.}
  \label{fig:swd_erank}
\end{figure}

We measure spectral concentration with the entropy-based effective
rank~\citep{roy2007effective}. For singular values
$\sigma_1 \ge \dots \ge \sigma_r$ and $p_i = \sigma_i / \sum_j \sigma_j$, it is
\begin{equation}
  \operatorname{erank}(W) = \exp\big(H(p)\big) = \exp\!\Big(-\sum_{i} p_i \log p_i\Big).
  \label{eq:erank}
\end{equation}
Effective rank ranges from $1$ for a rank-one matrix to $r=\min(m,n)$ for a flat
spectrum and decreases as the spectrum becomes more concentrated.
We report a weighted mean across weight matrices, with weights
$\min(m_l,n_l)$, excluding token embeddings and the output head.

Figure~\ref{fig:swd_erank} shows validation loss and mean effective rank during
training of a $257$M-parameter LLaMA~\citep{touvron2023llama} on
FineWeb-Edu~\citep{penedo2024fineweb} with Adam and spectral weight decay.
Appendix~\ref{app:pretraining_setup} summarizes the pretraining configurations.
Larger $\lambda$ lowers effective rank, with a corresponding
validation-loss trade-off. At the end of training, $\lambda=0.7$ reduces mean
effective rank by a factor of $2.17$ relative to $\lambda=0$, with a $1.10\%$
increase in validation loss.

We use effective rank instead of stable rank,
$\|W\|_F^2 / \|W\|_2^2$, because it predicts a practical compression threshold.
Truncating to $\lceil \operatorname{erank}(W) \rceil$ preserves model quality,
while lower ranks degrade it sharply (Figure~\ref{fig:erank_layers}, right).
Stable rank can remain near one when a single singular value dominates many
active directions, so it does not capture this threshold.

\begin{figure}[t]
  \centering
  \includegraphics[width=\textwidth]{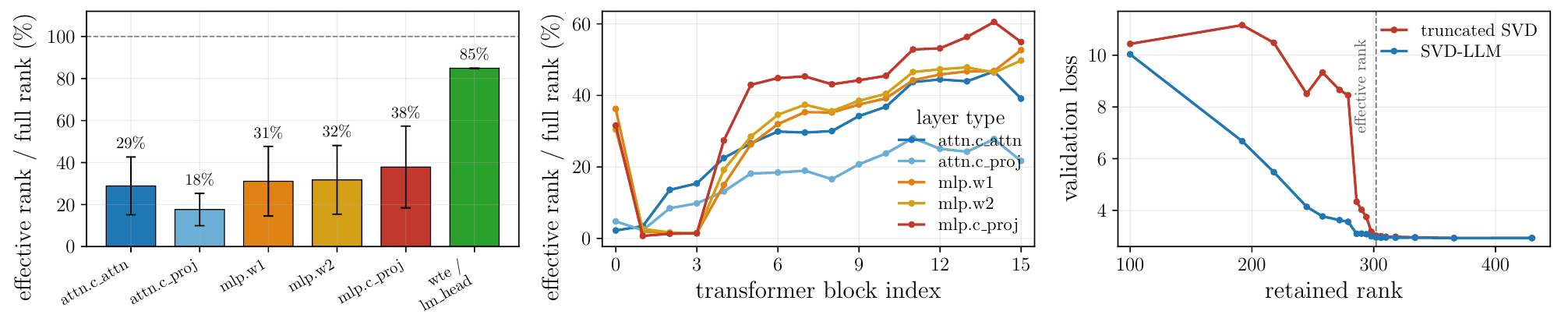}
  \caption{Effective-rank profiles and compression of the $\lambda=1$ model.
  Left: effective rank as a percentage of full
  rank, grouped by layer type. Middle: the same ratio across transformer blocks.
  Right: validation loss after truncated SVD and SVD-LLM at different retained
  ranks. The vertical line marks the mean effective rank.}
  \label{fig:erank_layers}
\end{figure}

\begin{figure}[H]
  \centering
  \includegraphics[width=0.95\textwidth, height=0.45\textwidth]{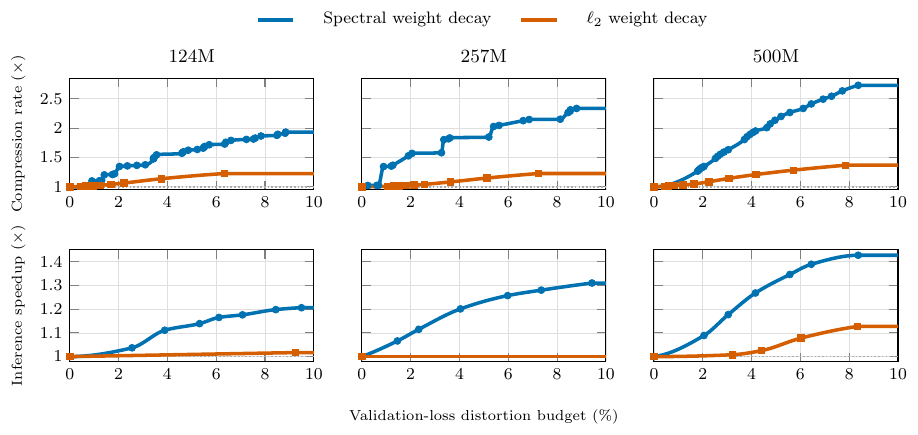}
  \caption{Compression and inference frontiers under validation-loss distortion
  budgets. Top: maximum compression rate relative to the best dense checkpoint
  at each scale. Bottom: maximum inference speedup.}
  \label{fig:compression_inference_tradeoff}
\end{figure}

\subsection{Compression and inference}

Spectral weight decay lowers the effective rank of weight matrices, making them
amenable to low-rank compression. We use SVD-LLM~\citep{wang2024svdllm}, which
gives the lowest validation loss in our matched comparison. We compress each
weight matrix to its own effective rank, which varies substantially across
layers (Figure~\ref{fig:erank_layers}, middle). We leave the token embeddings
and output head uncompressed because their ranks respond little to spectral
weight decay, and compressing them even to their effective ranks sharply
degrades model quality. The comparison and
compression procedure are described in Appendix~\ref{app:method_choice}.
Figure~\ref{fig:compression_inference_tradeoff} (top) reports the maximum
compression rate supported by each validation-loss distortion budget. Spectral
weight decay substantially shifts this frontier upward at all three model
scales. For the $500$M model and a $4\%$ distortion budget, it supports
$1.89\times$ compression, compared with $1.14\times$ after standard weight
decay.

The same factorization reduces the FLOPs of a linear layer when $r(m+n)<mn$.
This reduction does not translate directly into lower latency because the
factorized forward pass evaluates $(U_r\Sigma_r)(V_r^\top X)$ with two GEMMs
instead of the single GEMM $WX$. The additional kernel launch and intermediate
activation can offset the reduction in FLOPs. To improve tensor-core
utilization, we round the effective rank down to the nearest multiple of $16$,
\begin{equation}
  r_l = \max\!\left(16,\,16\left\lfloor
  \frac{\operatorname{erank}(W_l)}{16}\right\rfloor\right),
  \label{eq:aligned_rank}
\end{equation}
Adjusting an aligned rank by one step, $\pm16$, has negligible effect on
validation loss. We therefore use this rank-aligned compression scheme
throughout. The factorized models outperform their dense counterparts only at
large batch sizes, so we measure forward-pass latency with a batch size of $256$
on a single GPU. For the $500$M model, spectral weight decay yields a
$1.09\times$ speedup near a $2\%$ distortion budget and reaches $1.18\times$ at
$4\%$, compared with $1.01\times$ under standard weight decay
(Figure~\ref{fig:compression_inference_tradeoff}, bottom).

\subsection{Robustness to noisy labels}
\label{sec:label_noise_robustness}

Standard $\ell_2$ weight decay limits overfitting~\citep{loshchilov2019decoupled}.
We ask whether the low-rank bias of spectral weight decay provides stronger
protection against unreliable labels, which are common in applied machine
learning~\citep{song2022learning}. We replace $10\%$, $25\%$, $40\%$, or $60\%$
of the training labels uniformly at random and evaluate the final checkpoint
after a fixed training horizon. We assume that no clean validation set is
available, as is often the case in practice, so checkpoint selection cannot
rely on clean-validation early stopping. This protocol exposes memorization
of corrupted labels after the model has learned the clean signal.

\begin{figure}[H]
  \centering
  \includegraphics[width=0.95\textwidth, height=0.5\textwidth]{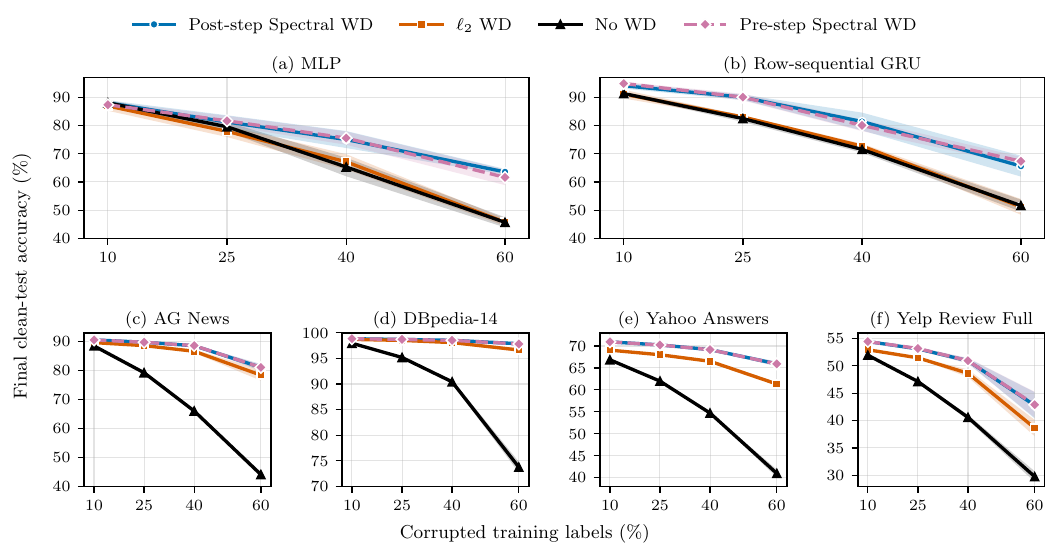}
  \caption{Final-checkpoint clean-test accuracy under label noise for MNIST
  (top) and BERT-base (bottom). Curves show means over five label-corruption
  seeds. Shaded bands show $\pm$ one sample standard deviation.
  Model initialization, data splits, and regularization coefficients are fixed.}
  \label{fig:label_noise_robustness}
\end{figure}

We train a $15$M-parameter MLP and row-sequential
GRU~\citep{cho2014learning} for $60$ epochs on $3{,}000$ MNIST
examples~\citep{lecun1998gradient}. We also fine-tune a $110$M-parameter
BERT-base model~\citep{devlin2019bert} for $25$ epochs on AG News, DBpedia-14,
Yahoo Answers, and Yelp Review Full~\citep{zhang2016character}. On MNIST,
spectral and decoupled $\ell_2$ weight decay act on the same matrices. For BERT,
both methods regularize the displacement $\Delta W=W-W_0$ from pretrained
weights: spectral weight decay penalizes its nuclear norm, while
$\ell_2$-SP~\citep{li2018explicit} penalizes its squared Frobenius norm.
Figure~\ref{fig:label_noise_robustness} shows final-checkpoint clean-test accuracy
as the mean $\pm$ one sample standard deviation over five label-corruption seeds.

On MNIST, the advantage of post-step spectral weight decay
over $\ell_2$ grows with label noise, reaching $17.78$ percentage points at
$60\%$ noise. These gains coincide with less memorization of corrupted labels.
On BERT, post-step spectral weight decay yields higher mean accuracy than $\ell_2$-SP on
every dataset and noise level, with gains of up to $4.59$ points at the highest
noise level.
Pre- and post-step spectral weight decay nearly coincide on BERT, while
neither order is consistently better on MNIST.
Appendix~\ref{app:label_noise_robustness} provides the architectures and
coefficient-selection protocol, together with detailed pre/post-step and baseline
results in Tables~\ref{tab:mnist_mlp_noise_ci}, \ref{tab:mnist_gru_noise_ci},
and~\ref{tab:bert_noise_ci}.

These results suggest a capacity-control effect: by reducing the effective
rank of weights, or their fine-tuning displacements, spectral weight decay
limits the model's ability to memorize corrupted labels.


\section{From Weight Decay to Spectral Shrinkage}
\label{sec:method}

We now examine the update underlying the low-rank behavior observed in
Section~\ref{sec:experiments}. Starting from decoupled weight decay, we derive
spectral shrinkage and contrast it with conventional $\ell_2$ decay. We then
analyze how update order affects the spectral correction and explain our
choice of post-step regularization.

We consider training with a matrix regularizer $h$,
\begin{equation}
  \min_W \big( \mathcal{L}(W) + \lambda h(W) \big),
  \label{eq:reg_obj}
\end{equation}
where $W\in\mathbb{R}^{m\times n}$ is a weight matrix, $\mathcal{L}$ is the
training loss, and $\lambda$ controls the regularization strength. When $h$
is differentiable, a gradient step with learning rate $\eta$ has the form
\begin{equation}
  W^{(k+1)} = W^{(k)}
  - \eta\,\nabla \mathcal{L}\!\left(W^{(k)}\right)
  - \eta\lambda\,\nabla h\!\left(W^{(k)}\right).
  \label{eq:regularized_update}
\end{equation}

AdamW instead decouples regularization from the update on the training
loss~\citep{loshchilov2019decoupled}. Let
$Z^{(k)}=W^{(k)}-\eta\Delta^{(k)}$ be the candidate produced using only
the loss, where $\Delta^{(k)}$ is the optimizer direction. For a differentiable
regularizer, its gradient can be evaluated
before or after the optimizer step:
\begin{equation}
  W_{\mathrm{pre}}^{(k+1)}
  =Z^{(k)}-\eta\lambda\nabla h\!\left(W^{(k)}\right),
  \qquad
  W_{\mathrm{post}}^{(k+1)}
  =Z^{(k)}-\eta\lambda\nabla h\!\left(Z^{(k)}\right).
  \label{eq:decoupled_post}
\end{equation}
Standard AdamW uses the pre-step form for the squared Frobenius penalty, whose
gradient at $W^{(k)}$ is $W^{(k)}$. For spectral weight decay, we use the post-step form. It
regularizes the candidate $Z^{(k)}$, which is the argument of the proximal
operator in proximal-gradient descent (Appendix~\ref{sec:method:wd_prox}).

\subsection{The spectral update}
\label{sec:method:swd}
\vspace{-0.1\baselineskip}

Spectral weight decay uses the nuclear norm
$h(W)=\|W\|_*=\sum_i\sigma_i(W)$ as its matrix penalty. For a compact SVD
$Z^{(k)}=U_Z\operatorname{diag}(\sigma_i)V_Z^\top$, we use the polar factor
$U_ZV_Z^\top$ in place of $\nabla h(Z^{(k)})$ in the post-step rule
\eqref{eq:decoupled_post}. This gives \eqref{eq:swd_decoupled} at iteration $k$.
The polar factor $U_ZV_Z^\top$ is a minimum-Frobenius-norm subgradient of
$\|Z^{(k)}\|_*$~\citep{recht2010guaranteed}.
We approximate $U_ZV_Z^\top$ without an SVD using Newton-Schulz
iterations, using the same five-step polynomial iteration as
Muon~\citep{jordan2024muon}.

\vspace{-0.7\baselineskip}
\paragraph{Comparison with $\ell_2$ weight decay.}
Ordinary weight decay follows from the squared Frobenius penalty
$h_F(W)=\tfrac{1}{2}\|W\|_F^2$, whose gradient is $W$. Fix the current
weights $W=W^{(k)}$ and the loss-only optimizer direction
$\Delta=\Delta^{(k)}$, and set $Z=W-\eta\Delta$ and $\tau=\eta\lambda$.
Let $W=U_W\Sigma_WV_W^\top$ and $Z=U_Z\Sigma_ZV_Z^\top$ be compact SVDs.
The resulting weights are
\begin{equation}
  W_{\ell_2}^{\mathrm{pre}}=Z-\tau W,\qquad
  W_{\ell_2}^{\mathrm{post}}=(1-\tau)Z,\qquad
  W_{\mathrm{spec}}^{\mathrm{pre}}=Z-\tau U_WV_W^\top,\qquad
  W_{\mathrm{spec}}^{\mathrm{post}}=Z-\tau U_ZV_Z^\top.
  \label{eq:wd}
\end{equation}
The pre- and post-step $\ell_2$ updates differ by
$W_{\ell_2}^{\mathrm{pre}}-W_{\ell_2}^{\mathrm{post}}=-\eta^2\lambda\Delta$.
For a bounded optimizer direction, this is $O(\eta^2\lambda)$ for a single
step. For spectral WD, the order
dependence is less straightforward because the task update can change
the polar factor that determines the decay direction. The following result
quantifies the difference.

\begin{proposition}[Sensitivity to update order]
\label{prop:spectral_order}
Let $W,Z=W-\eta\Delta\in\mathbb{R}^{m\times n}$ have full rank, and denote
their smallest singular values by
$\sigma_{\min}(W)$ and $\sigma_{\min}(Z)$. Then
\begin{equation}
  \left\|W_{\mathrm{spec}}^{\mathrm{post}}-
  W_{\mathrm{spec}}^{\mathrm{pre}}\right\|_F
  \leq \frac{2\eta^2\lambda\|\Delta\|_F}
  {\sigma_{\min}(W)+\sigma_{\min}(Z)},
  \qquad
  \left\|W_{\mathrm{spec}}^{\mathrm{post}}-
  W_{\mathrm{spec}}^{\mathrm{pre}}\right\|_2
  \leq 2\eta\lambda.
  \label{eq:spectral_order_bound}
\end{equation}
If both smallest singular values are at least $\gamma>0$ and
$\|\Delta\|_F=O(1)$, the discrepancy is $O(\eta^2\lambda/\gamma)$.
The $2\eta\lambda$ spectral-norm bound is attained near rank deficiency.
\end{proposition}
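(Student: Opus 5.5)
The difference is $W_{\mathrm{spec}}^{\mathrm{post}}-W_{\mathrm{spec}}^{\mathrm{pre}}=-\tau\,(U_ZV_Z^\top-U_WV_W^\top)$ with $\tau=\eta\lambda$. Both bounds therefore reduce to statements about how far apart the polar factors $P(A)=U_AV_A^\top$ of $W$ and $Z$ can be. I will write $P_W=P(W)$ and $P_Z=P(Z)$.

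The spectral-norm bound is immediate. Since $W$ and $Z$ have full rank, $P_W$ and $P_Z$ are partial isometries with all nonzero singular values equal to one, so $\|P_W\|_2=\|P_Z\|_2=1$. The triangle inequality then gives $\|P_Z-P_W\|_2\le 2$, hence a discrepancy of at most $2\eta\lambda$.

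The Frobenius bound needs the classical perturbation bound for polar factors of full-rank matrices, in the spirit of Li's inequality:
\[
\|P(A)-P(B)\|_F\le \frac{2\|A-B\|_F}{\sigma_{\min}(A)+\sigma_{\min}(B)}.
\]
Applying it with $A=Z$, $B=W$ and $\|A-B\|_F=\eta\|\Delta\|_F$, then multiplying by $\tau$, gives $2\eta^2\lambda\|\Delta\|_F/(\sigma_{\min}(W)+\sigma_{\min}(Z))$ as claimed. To keep the argument self-contained, I would prove the inequality directly.

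\emph{Square case} ($m=n$). Write the polar decompositions $A=P_AH_A$ and $B=P_BH_B$ with $H_A,H_B\succ 0$. Then
\[
A-B=P_AH_A-P_BH_B,\qquad P_A^\top(A-B)P_B=H_AP_A^\top P_B-P_A^\top P_B H_B .
\]
I would also use the symmetric identity built from $(A-B)^\top$ and combine the two. This yields a Sylvester-type relation $H_A X+XH_B=E$, where $X$ is the skew-type part of $P_A^\top P_B-I$ (or the difference itself) and $\|E\|_F\le 2\|A-B\|_F$. Solving in the eigenbases of $H_A$ and $H_B$ gives entrywise $|X_{ij}|=|E_{ij}|/(a_i+b_j)\le |E_{ij}|/(\sigma_{\min}(A)+\sigma_{\min}(B))$, which delivers the bound.

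\emph{Rectangular case} ($m\ne n$). Here the extra components orthogonal to the ranges need separate handling, via the standard augmentation or the known rectangular version of Li's theorem. This bookkeeping, and getting the constant to be exactly $2$ rather than something larger, is the step I expect to be the main obstacle. If a clean derivation resists, I would cite the rectangular result from the matrix perturbation literature. The $O(\eta^2\lambda/\gamma)$ corollary follows at once, since the denominator is at least $2\gamma$.

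For attainment near rank deficiency, I would give an explicit $2\times2$ example. Take
\[
W=\operatorname{diag}(1,\epsilon),\qquad Z=\operatorname{diag}(1,-\epsilon),
\]
so that $\Delta=\operatorname{diag}(0,2\epsilon/\eta)$ is small. Then $P_W=\operatorname{diag}(1,1)$ and $P_Z=\operatorname{diag}(1,-1)$, so $\|P_Z-P_W\|_2=2$ exactly. The discrepancy therefore equals $2\eta\lambda$ while $\|\Delta\|_F\to0$ as $\epsilon\to0$. This contrasts with the $O(\eta^2\lambda)$ order dependence of $\ell_2$ decay noted just before the statement.
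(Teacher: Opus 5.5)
Your proposal follows the paper's proof essentially step for step. It uses the identity reducing the discrepancy to $\eta\lambda(P_W-P_Z)$, the triangle inequality for the spectral bound, and the polar-factor perturbation lemma for the Frobenius bound; the paper cites that lemma rather than proving it, obtaining the rectangular case by transposing a full-row-rank result. Your sign-flipping $2\times 2$ diagonal example also matches the paper's. The paper takes $W=\operatorname{diag}(1,\eta/2)$ with $\Delta$ the gradient of the fixed quadratic $\tfrac12\|X-\operatorname{diag}(1,-1)\|_F^2$, so that $\Delta$ stays bounded as $\eta\to0$, whereas you fix $\eta$ and let $\Delta\to0$; both attain $2\eta\lambda$ exactly.

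One slip if you write out the square case yourself: $P_A^\top(A-B)P_B$ equals $H_AP_B-P_A^\top P_BH_BP_B$, not the expression you display. Instead, with $X=P_A^\top P_B$, use $P_A^\top(A-B)=H_A-XH_B$ and $(A-B)^\top P_B=H_AX-H_B$. Their difference gives $H_A(I-X)+(I-X)H_B=E$ with $\|E\|_F\le 2\|A-B\|_F$, and $\|P_A-P_B\|_F=\|I-X\|_F$.
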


The Frobenius bound follows from the polar-factor perturbation estimate of
\citet[Proposition~2, Appendix~A.4]{li2026scalingmuondiffusiontransformers}.
Appendix~\ref{app:order_sensitivity}
gives a proof and the quadratic example. Post-step decay
uses the singular directions of the updated weights. Pre-step decay uses
those of $W$, which can mix the current directions when the polar factor
changes. Proposition~\ref{prop:spectral_order} shows that this difference can
be larger than the corresponding pre/post difference under $\ell_2$
weight decay. Appendix~\ref{app:spectral_order_experiments} compares the two
orders empirically.

The two post-step rules then differ in how they shrink the spectrum of
$Z$. $\ell_2$ decay multiplies each singular value by
$1-\tau$, while spectral decay subtracts $\tau$ from each coefficient in
the same singular basis. Above $\tau$, the relative reduction
$\tau/\sigma_i$ is greater for smaller singular values. Below $\tau$, the
spectral step lets the signed coefficient cross zero. The exact nuclear-norm
proximal update instead sets it to zero
(Appendix~\ref{sec:method:wd_prox}). This additive shrinkage should favor
low-rank weights by driving small singular values toward zero, analogous to
the sparsity induced by $\ell_1$ regularization in linear regression
\citep{tibshirani1996regression}. The approximate spectral step need not
produce exact zeros in a single iteration. 

The interpretation of post-step spectral weight decay as an approximate
nuclear-norm proximal step further motivates its use over the pre-step form.
Appendix~\ref{sec:method:wd_prox} develops this interpretation.


\section{Limitations and Future Work}
\label{sec:future_work}
\subsection{Fine-tuning for low-rank compression}
\label{sec:future_work_finetuning}

\begin{wrapfigure}{r}{0.35\linewidth}
  \vspace{-8.7pt}
  \centering
  \includegraphics[width=\linewidth]{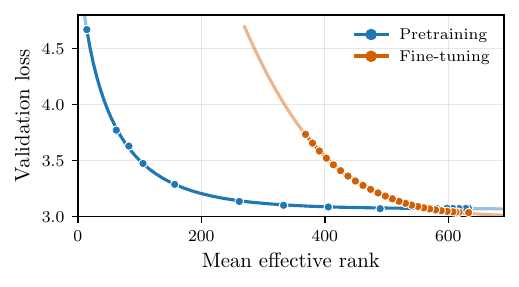}
  \caption{Validation loss versus mean effective rank for $124$M spectral
  weight decay pretraining and $32.8$M-token fine-tuning from an $\ell_2$ WD
  checkpoint.}
  \label{fig:pretrain_finetune_loss_rank}
  \vspace{-\baselineskip}
\end{wrapfigure}

Pretraining is expensive, motivating methods that prepare existing checkpoints
for compression during a short fine-tuning stage. Substantial rank reduction
without loss of model quality remains difficult.

We compare spectral weight decay during pretraining with its use after
pretraining from a standard $\ell_2$ weight decay checkpoint. The $124$M-parameter
model is pretrained for one Chinchilla-optimal training
budget~\citep{hoffmann2022empirical}, then fine-tuned on $32.8$M tokens.
Figure~\ref{fig:pretrain_finetune_loss_rank} plots the final validation loss
against mean effective rank as the spectral coefficient varies. Within the
observed overlap, fine-tuning incurs substantially more loss at the same rank.
Fine-tuning also requires a much larger $\lambda$ before rank changes appreciably.
The sweeps in Figure~\ref{fig:pretrain_finetune_loss_rank} use $\lambda$ values
from $0$ to $7$ for pretraining and from $4$ to $250$ for fine-tuning.
Preparing an existing checkpoint for low-rank compression without a substantial loss increase remains open.
SLORR also reports difficulties in fine-tuning LLMs with low-rank regularization
while preserving their general capabilities~\citep{gonzalezmartinez2026slorr}.
Appendix~\ref{app:prehab} compares spectral weight decay with the fine-tuning
method Prehab~\citep{qin2025low}.

\subsection{Spectral regularization with matrix-aware optimizers}
\label{sec:future_work_optimizers}

The main experiments use Adam-family optimizers. Conventional $\ell_2$ weight
decay is routinely used across optimizer families, so we also examine whether
spectral weight decay transfers beyond Adam. Its decoupled correction can be
applied to the candidate weights produced by any optimizer, but the resulting
loss and rank dynamics depend on the optimizer.

\begin{figure}[H]
  \centering
  \includegraphics[width=\textwidth]{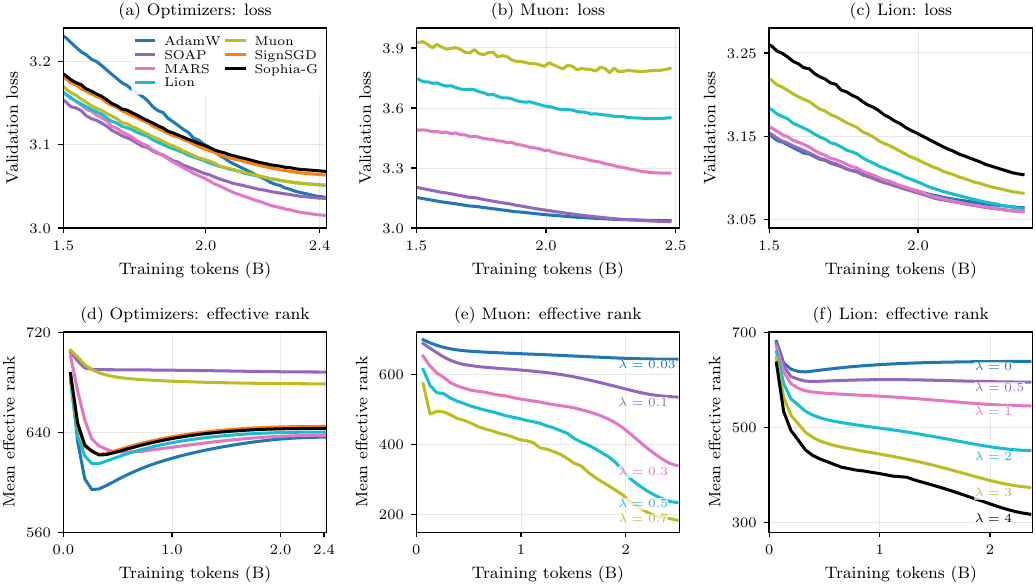}
  \caption{Optimizer-dependent loss and rank dynamics. The top row shows
  validation loss, and the bottom row shows mean effective rank. The columns
  compare $\ell_2$ weight decay with coefficient $0.1$ across optimizers, spectral regularization
  with Muon, and spectral regularization with Lion. Colors match between the
  loss and rank panels.}
  \label{fig:optimizer_l2_rank}
\end{figure}

Figure~\ref{fig:optimizer_l2_rank}(a,d) compares optimizers with the same
$\ell_2$ weight decay coefficient of $0.1$. Muon~\citep{jordan2024muon} and
SOAP~\citep{vyas2024soap} produce higher
effective ranks than the other optimizers, whose final ranks cluster in a
narrow range. This suggests that their optimization dynamics may oppose the
spectral concentration encouraged by spectral weight decay. The Muon sweep
in Figure~\ref{fig:optimizer_l2_rank}(b,e) shows that substantial rank reduction
comes with a steep validation-loss increase, limiting the practical appeal of
this combination. By contrast, Lion exhibits a more favorable
trade-off between loss and rank in Figure~\ref{fig:optimizer_l2_rank}(c,f), qualitatively
similar to Adam. 

The required regularization scale also varies across optimizers. A mean
effective rank of about $300$ requires $\lambda=0.8$ for Adam,
$\lambda=0.4$ for Muon, and $\lambda=4$ for Lion.
Although the $\ell_2$ baselines use a common coefficient, spectral weight
decay requires optimizer-specific calibration for the desired rank and
validation-loss budget.
The strong training results reported for Muon motivate
adapting the techniques studied here to Muon and related matrix-aware
optimizers. Understanding how to combine their optimization benefits with
effective spectral regularization remains a direction for future work.


\section{Conclusion}

We introduced spectral weight decay by replacing the squared Frobenius penalty
of conventional $\ell_2$ weight decay with the nuclear norm. Its decoupled
post-step correction shrinks the spectrum of the updated weights additively,
with a stronger relative effect on smaller singular values. The connection to
approximate proximal descent supports this choice of update order. Our
analysis also shows that pre- and post-step spectral updates can differ more
substantially than their $\ell_2$ counterparts near rank deficiency.

Across $124$M to $500$M LLaMA models trained with Adam-family optimizers, the
resulting low-effective-rank structure improves post-training compression
under matched validation-loss budgets. At $500$M and a $4\%$ distortion budget,
we obtain $1.89\times$ compression and $1.18\times$ GPU inference speedup. The benefits extend beyond compression. Under fixed-horizon
training with noisy labels, spectral regularization reduces memorization and
improves clean-test accuracy on image and text classification tasks. These observations are
consistent with the view that concentrating the weight spectrum limits the
capacity available to fit corrupted labels.

The benefits depend on when and how spectral regularization is applied.
Short fine-tuning from a conventional-weight-decay checkpoint incurs a larger
loss penalty for rank reduction than regularization during pretraining.
Optimizer choice also matters: Adam and Lion exhibit a favorable trade-off between loss and rank,
whereas the Muon runs show a steeper loss increase as rank decreases. Adapting
spectral regularization to matrix-aware optimizers and reducing the cost of
preparing existing checkpoints for low-rank compression remain open problems.


\nocite{kingma2015adam,yuan2025mars,chen2023symbolic,bernstein2018signsgd,liu2024sophia}

\setlength{\bibsep}{8.3pt plus 4pt minus 2pt}
\end{mainpart}

\begin{appendixpart}

\section{Theoretical details}
\label{app:theory}

\subsection{Proof of the spectral update-order bound}
\label{app:order_sensitivity}

For a full-rank matrix $A=U_A\Sigma_AV_A^\top$, write
$Q(A)=U_AV_A^\top$ for its polar factor. We first bound changes in this
factor. The bound holds for rectangular matrices and does not require gaps
between their nonzero singular values.
It is proved by \citet[Proposition~2, Appendix~A.4]{li2026scalingmuondiffusiontransformers}
for full-row-rank matrices. The full-column-rank case follows by transposition.

\begin{lemma}[Polar-factor perturbation]
\label{lem:polar_perturbation}
For full-rank $A,B\in\mathbb{R}^{m\times n}$,
\begin{equation}
  \|Q(A)-Q(B)\|_F
  \leq \frac{2\|A-B\|_F}
  {\sigma_{\min}(A)+\sigma_{\min}(B)}.
  \label{eq:polar_perturbation}
\end{equation}
\end{lemma}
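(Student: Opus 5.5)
The plan is to reduce the claim to a Sylvester equation whose coefficients are positive semidefinite matrices with spectra bounded below. By transposition it suffices to treat the full-row-rank case $m\le n$, since $Q(A^\top)=Q(A)^\top$ and singular values are invariant under transposition. I would work with the two polar forms
\begin{equation*}
  A=H_AQ_A=Q_AK_A,\qquad H_A=(AA^\top)^{1/2},\quad K_A=(A^\top A)^{1/2}=Q_A^\top H_AQ_A ,
\end{equation*}
where $Q_A=Q(A)$ has orthonormal rows. Full row rank makes $H_A$ positive definite with $\lambda_{\min}(H_A)=\sigma_{\min}(A)$. The same notation is used for $B$.

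\textbf{Step 1 (Sylvester identity).} Set $M=Q_A-Q_B$ and $E=(A-B)-Q_A(A-B)^\top Q_B$. Using $B^\top Q_B=K_B$ and $Q_AA^\top Q_B=Q_AK_AQ_A^\top Q_B=H_AQ_B$, a direct expansion gives
\begin{equation*}
  H_AM+MK_B=E .
\end{equation*}
Since $Q_A$ and $Q_B$ have operator norm one, $\|E\|_F\le 2\|A-B\|_F$. This is where the factor $2$ comes from.

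\textbf{Step 2 (entrywise solution).} Diagonalize $H_A=U\operatorname{diag}(\alpha_i)U^\top$ and $K_B=V\operatorname{diag}(\beta_j)V^\top$. In these bases the identity reads $\widetilde M_{ij}=\widetilde E_{ij}/(\alpha_i+\beta_j)$. Each $\alpha_i\ge\sigma_{\min}(A)$. The nonzero $\beta_j$ are the singular values of $B$, so they satisfy $\beta_j\ge\sigma_{\min}(B)$. On the columns of $V$ spanning $\operatorname{row}(B)$ we therefore get $|\widetilde M_{ij}|\le|\widetilde E_{ij}|/(\sigma_{\min}(A)+\sigma_{\min}(B))$, which is exactly the desired rate. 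In the square case $K_B$ is positive definite, so this finishes the proof.

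\textbf{Step 3 (kernel directions; main obstacle).} When $m<n$, the matrix $K_B$ has $n-m$ zero eigenvalues on $\ker B$. On those columns the Sylvester estimate only gives the denominator $\sigma_{\min}(A)$. Let $P_B$ be the orthogonal projector onto $\operatorname{row}(B)$. Because $Q_B(I-P_B)=0$, the kernel part satisfies $M(I-P_B)=Q_A(I-P_B)$ and $H_AM(I-P_B)=(A-B)(I-P_B)$. Only one copy of $A-B$ enters there, which gives $\|M(I-P_B)\|_F\le\|(A-B)(I-P_B)\|_F/\sigma_{\min}(A)$.

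To recover the symmetric denominator, I would run the same argument with $A$ and $B$ exchanged, using the equation $H_BM+MK_A=-E'$. I would then combine the two estimates on the four blocks cut out by $P_A$ and $P_B$. On $\ker A\cap\ker B$ the matrix $M$ vanishes, which helps. On the mixed blocks one must use that $Q_A(I-P_B)$ and $Q_B(I-P_A)$ are controlled by the principal angles between $\operatorname{row}(A)$ and $\operatorname{row}(B)$. Those angles are themselves bounded through $\|A-B\|_F$ via $\|(A-B)(I-P_B)\|_F=\|A(I-P_B)\|_F\ge\sigma_{\min}(A)\|P_A(I-P_B)\|_F$. Carrying this bookkeeping out cleanly, so that the constant stays exactly $2/(\sigma_{\min}(A)+\sigma_{\min}(B))$, is the step I expect to be delicate. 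If it resists, I would instead invoke the rectangular argument of \citet[Proposition~2, Appendix~A.4]{li2026scalingmuondiffusiontransformers} for this step, as the paper indicates.
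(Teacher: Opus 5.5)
\textbf{Steps 1 and 2 are correct.} The identity $H_AM+MK_B=E$ checks out, and for $m=n$ the entrywise division by $\alpha_i+\beta_j$ is a complete, self-contained proof of the square case. The paper does not prove the lemma itself. It cites \citet{li2026scalingmuondiffusiontransformers} for full-row-rank matrices and transposes, and that is exactly the case $m<n$ your Step 3 leaves open. With your fallback, the proposal reduces to the paper's argument and Steps 1--2 become redundant. As an independent proof, Step 3 has a genuine gap.

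\textbf{The gap in Step 3.} If you keep $\|E\|_F\le 2\|A-B\|_F$, the row-space block alone uses the entire budget, $\|MP_B\|_F^2\le 4\|A-B\|_F^2/(\sigma_{\min}(A)+\sigma_{\min}(B))^2$. Any nonzero estimate for the kernel block $M(I-P_B)=Q_A(I-P_B)$ then pushes the combined bound past the target. Exchanging $A$ and $B$ has the same defect. Your principal-angle inequality is just your kernel estimate again, since $\|P_A(I-P_B)\|_F=\|Q_A(I-P_B)\|_F$. Also, the ``four blocks'' of the non-commuting projectors $P_A,P_B$ are not an orthogonal decomposition.

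\textbf{The missing idea} is a sharper bound on $E$ that frees budget for the kernel block. Put $D=A-B$, $\alpha=\sigma_{\min}(A)$, $\beta=\sigma_{\min}(B)$, and $a_X=\|DP_X\|_F$, $b_X=\|D(I-P_X)\|_F$ for $X\in\{A,B\}$, so $a_X^2+b_X^2=\|D\|_F^2$.

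\emph{Row-space block.} Right multiplication by a matrix with orthonormal rows preserves the Frobenius norm, so $\|Q_AD^\top Q_B\|_F=\|DQ_A^\top\|_F=a_A$. Hence $\|EP_B\|_F\le a_A+a_B$, and your Step 2 restricted to $\operatorname{row}(B)$ gives $\|MP_B\|_F^2\le 2(a_A^2+a_B^2)/(\alpha+\beta)^2$.

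\emph{Kernel block.} Both row spaces have dimension $m$, so $\|Q_A(I-P_B)\|_F^2=m-\|Q_AQ_B^\top\|_F^2=\|Q_B(I-P_A)\|_F^2$. Your one-sided estimate together with its $A\leftrightarrow B$ mirror then gives
\[
  \|M(I-P_B)\|_F^2\le\min\Bigl\{\frac{b_B^2}{\alpha^2},\frac{b_A^2}{\beta^2}\Bigr\}
  \le\frac{b_A^2+b_B^2}{\alpha^2+\beta^2}\le\frac{2(b_A^2+b_B^2)}{(\alpha+\beta)^2}.
\]
The middle step bounds the minimum by the average with weights $\alpha^2,\beta^2$.

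Summing the two orthogonal blocks gives $\|M\|_F^2\le 4\|D\|_F^2/(\alpha+\beta)^2$. This closes the case $m<n$ with the exact constant and makes your route fully self-contained.
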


\begin{proof}[Proof of Proposition~\ref{prop:spectral_order}]
By the definitions of the two spectral updates,
\begin{equation}
  W_{\mathrm{spec}}^{\mathrm{post}}-
  W_{\mathrm{spec}}^{\mathrm{pre}}
  =\eta\lambda\bigl(Q(W)-Q(Z)\bigr).
  \label{eq:spectral_order_identity}
\end{equation}
Applying Lemma~\ref{lem:polar_perturbation} with $Z=W-\eta\Delta$ gives
\begin{align*}
  \left\|W_{\mathrm{spec}}^{\mathrm{post}}-
  W_{\mathrm{spec}}^{\mathrm{pre}}\right\|_F
  &\leq \frac{2\eta\lambda\|W-Z\|_F}
  {\sigma_{\min}(W)+\sigma_{\min}(Z)}
  =\frac{2\eta^2\lambda\|\Delta\|_F}
  {\sigma_{\min}(W)+\sigma_{\min}(Z)}.
\end{align*}
Since a full-rank polar factor has spectral norm one, the triangle
inequality also gives
\begin{align*}
  \left\|W_{\mathrm{spec}}^{\mathrm{post}}-
  W_{\mathrm{spec}}^{\mathrm{pre}}\right\|_2
  =\eta\lambda\|Q(W)-Q(Z)\|_2
  \leq\eta\lambda\bigl(\|Q(W)\|_2+\|Q(Z)\|_2\bigr)
  =2\eta\lambda.
\end{align*}

To show that the latter bound is attained near rank deficiency, consider
the fixed loss
$f(X)=\tfrac12\|X-\operatorname{diag}(1,-1)\|_F^2$.
It is $1$-smooth and $1$-strongly convex. For $0<\eta<1$, choose
\begin{equation}
  \begin{gathered}
    W_\eta=\begin{pmatrix}1&0\\0&\eta/2\end{pmatrix},\qquad
    D_\eta=\nabla f(W_\eta)
    =\begin{pmatrix}1&0\\0&\eta/2\end{pmatrix}
    -\begin{pmatrix}1&0\\0&-1\end{pmatrix}
    =\begin{pmatrix}0&0\\0&1+\eta/2\end{pmatrix}.
  \end{gathered}
\end{equation}
The loss-only step with direction $D_\eta$ gives
$Z_\eta=W_\eta-\eta D_\eta
=\operatorname{diag}(1,-\eta(1+\eta)/2)$.
Both matrices have full rank, so their polar factors are unique and
Lemma~\ref{lem:polar_perturbation} applies. Their smallest singular values are
$\eta/2$ and $\eta(1+\eta)/2$, respectively. Thus, they approach rank
deficiency as $\eta\to0$, while the task direction remains bounded:
$\|D_\eta\|_F\leq3/2$. The task update therefore vanishes with $\eta$,
but it changes the sign of the second diagonal entry. For a diagonal matrix
with nonzero entries, the polar factor replaces each diagonal entry by its
sign. Consequently,
$Q(W_\eta)=I$ and $Q(Z_\eta)=\operatorname{diag}(1,-1)$ differ by a
fixed amount even as the weights become arbitrarily close.
Thus, a small task update does not guarantee a small difference between
pre- and post-step spectral weight decay relative to the decay step itself.
Equation~\eqref{eq:spectral_order_identity} therefore gives
\begin{equation*}
  W_{\mathrm{spec}}^{\mathrm{post}}-W_{\mathrm{spec}}^{\mathrm{pre}}
  =\eta\lambda\bigl(Q(W_\eta)-Q(Z_\eta)\bigr)
  =\eta\lambda\begin{pmatrix}0&0\\0&2\end{pmatrix}.
\end{equation*}
This matrix has a single nonzero singular value, $2\eta\lambda$.
Its spectral norm and Frobenius norm therefore coincide:
\begin{equation}
  \left\|W_{\mathrm{spec}}^{\mathrm{post}}-
  W_{\mathrm{spec}}^{\mathrm{pre}}\right\|_2
  =\left\|W_{\mathrm{spec}}^{\mathrm{post}}-
  W_{\mathrm{spec}}^{\mathrm{pre}}\right\|_F
  =2\eta\lambda.
\end{equation}
\end{proof}

\subsection{Proximal interpretation}
\label{sec:method:wd_prox}

Proximal-gradient descent gives a complementary interpretation of the
post-step rule. It first follows the gradient of the training loss, then
regularizes the resulting candidate. For a regularizer $h$ and $\tau>0$, its
proximal operator is~\citep{parikh2014proximal}
\begin{equation}
  \operatorname{prox}_{\tau,h}(Z)
  = \argmin_X \left(h(X)+\frac{1}{2\tau}\|X-Z\|_F^2\right).
  \label{eq:prox_def}
\end{equation}
With $Z=W^{(k)}-\eta\nabla\mathcal{L}(W^{(k)})$ and $\tau=\eta\lambda$,
proximal gradient descent applies
\begin{equation}
  W^{(k+1)} = \operatorname{prox}_{\tau,h}\!\left(
  W^{(k)}-\eta\nabla\mathcal{L}\!\left(W^{(k)}\right)\right).
  \label{eq:prox_gd}
\end{equation}

For the squared Frobenius penalty $h_F(W)=\tfrac{1}{2}\|W\|_F^2$, the exact
proximal map is
$\operatorname{prox}_{\tau,h_F}(Z)=Z/(1+\tau)$. Post-step $\ell_2$ shrinkage,
$(1-\tau)Z$, is its first-order expansion. Pre-step AdamW is also first-order
accurate. \citet{zhuang2022understanding} proved this for vector-valued
parameters, and vectorization gives the matrix statement.

\begin{proposition}
\label{prop:wd_equiv}
Let $h=h_F$, $\tau=\eta\lambda$, and
$Z^{(k)}=W^{(k)}-\eta\nabla\mathcal{L}(W^{(k)})$. For fixed $\lambda$ and
bounded $W^{(k)}$ and $\nabla\mathcal{L}(W^{(k)})$, the pre-step update
$W_{\ell_2}^{\mathrm{pre}}$ in \eqref{eq:wd} differs from the proximal
iteration \eqref{eq:prox_gd} by
$O(\eta^2\lambda)$ as $\eta\to0$.
\end{proposition}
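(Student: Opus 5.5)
We compare the pre-step update and the proximal iteration directly, since both are explicit. Write $G=\nabla\mathcal{L}(W^{(k)})$ and $W=W^{(k)}$, so that $Z^{(k)}=W-\eta G$. The pre-step update in \eqref{eq:wd} is
\begin{equation*}
  W_{\ell_2}^{\mathrm{pre}}=Z^{(k)}-\tau W .
\end{equation*}
For $h_F$, the exact proximal iteration \eqref{eq:prox_gd} is $Z^{(k)}/(1+\tau)$. This follows from minimizing the strongly convex objective $\tfrac12\|X\|_F^2+\tfrac1{2\tau}\|X-Z\|_F^2$: setting its gradient $X+(X-Z)/\tau$ to zero gives $X=Z/(1+\tau)$.

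The plan is to split the difference through the post-step rule $(1-\tau)Z^{(k)}$:
\begin{equation*}
  W_{\ell_2}^{\mathrm{pre}}-\frac{Z^{(k)}}{1+\tau}
  =\Bigl(W_{\ell_2}^{\mathrm{pre}}-(1-\tau)Z^{(k)}\Bigr)
  +\Bigl((1-\tau)Z^{(k)}-\frac{Z^{(k)}}{1+\tau}\Bigr).
\end{equation*}
The first bracket equals $\tau(Z^{(k)}-W)=-\eta^2\lambda G$, as already noted in the main text. Its Frobenius norm is at most $\eta^2\lambda\sup\|G\|_F$, which is $O(\eta^2\lambda)$ by boundedness of the gradient.

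For the second bracket, we use the identity
\begin{equation*}
  (1-\tau)-\frac{1}{1+\tau}=-\frac{\tau^2}{1+\tau},
\end{equation*}
so its norm is $\tau^2\|Z^{(k)}\|_F/(1+\tau)\le \eta^2\lambda^2\bigl(\|W\|_F+\eta\|G\|_F\bigr)$. With $\lambda$ fixed, the factor $\lambda^2=\lambda\cdot\lambda$ is absorbed into the constant, giving $O(\eta^2\lambda)$ uniformly for $\eta\le 1$. Summing the two bounds via the triangle inequality yields the claim.

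There is no real obstacle here. The only care needed is to be explicit that the $O(\cdot)$ constant depends on the fixed $\lambda$ and on the assumed bounds on $\|W^{(k)}\|_F$ and $\|\nabla\mathcal{L}(W^{(k)})\|_F$. This matters because the second term scales as $\eta^2\lambda^2$ rather than $\eta^2\lambda$, so it is $O(\eta^2\lambda)$ only because $\lambda$ is held fixed. Alternatively, one could cite \citet{zhuang2022understanding} after vectorizing $W$, since the Frobenius norm is the Euclidean norm of $\operatorname{vec}(W)$. The direct computation above is self-contained, so I would present it instead.
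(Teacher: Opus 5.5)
Your proof is correct, and it is more self-contained than the paper's. The paper does no computation for this proposition. It cites \citet{zhuang2022understanding}, who establish the first-order agreement between decoupled weight decay and the proximal update for vector-valued parameters. It then transfers that result to matrices by vectorization, since $\|\cdot\|_F$ is the Euclidean norm of $\operatorname{vec}(\cdot)$.

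You instead split the error through the post-step rule $(1-\tau)Z^{(k)}$. This combines two facts the paper states only in passing: the identity $W_{\ell_2}^{\mathrm{pre}}-W_{\ell_2}^{\mathrm{post}}=-\eta^2\lambda\Delta$ from the comparison with $\ell_2$ weight decay, and the remark that $(1-\tau)Z$ is the first-order expansion of $Z/(1+\tau)$. Together they give the exact error $-\eta^2\lambda G-\tfrac{\tau^2}{1+\tau}Z^{(k)}$.

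The citation buys brevity and ties the claim to the AdamW literature. Your computation gives an explicit constant, $(1+\lambda)\sup\|G\|_F+\lambda\sup\|W\|_F$ for $\eta\le 1$. It also shows that the second term scales as $\eta^2\lambda^2$, which is exactly why the hypothesis of fixed (or at least bounded) $\lambda$ is needed. Finally, your argument never uses that $G$ is a gradient: the same lines go through for any bounded optimizer direction $\Delta$, which is the setting in which pre-step AdamW in \eqref{eq:wd} actually operates.
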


For the nuclear norm, the proximal map instead performs singular-value
soft-thresholding~\citep{cai2010singular}. Given
$Z=U\operatorname{diag}(\sigma_i)V^\top$,
\begin{equation}
  \operatorname{prox}_{\tau,\|\cdot\|_*}(Z)
  =U\operatorname{diag}\!\big(\max(\sigma_i-\tau, 0)\big)V^\top.
  \label{eq:svt}
\end{equation}
Equivalently,
\begin{align}
  \operatorname{prox}_{\tau,\|\cdot\|_*}(Z)
  &=Z-U\operatorname{diag}(\gamma_i)V^\top,
  &
  \gamma_i&=
  \begin{cases}
    \tau, & \sigma_i\geq\tau,\\
    \sigma_i, & \sigma_i<\tau.
  \end{cases}
  \label{eq:swd_exact}
\end{align}
The post-step spectral update \eqref{eq:swd_decoupled} uses $\tau UV^\top$ for
every active singular direction. It matches the proximal correction when
$\sigma_i\geq\tau$. For $0<\sigma_i<\tau$, the exact update instead subtracts
only $\sigma_i$ and sets that direction to zero. With an exact polar factor,
spectral weight decay is thus an approximate nuclear-norm proximal step whose
discrepancy is confined to the sub-threshold tail.

\label{app:svt_compare}

\begin{figure}[t]
  \centering
  \includegraphics[width=\textwidth, height=0.2281\textwidth]{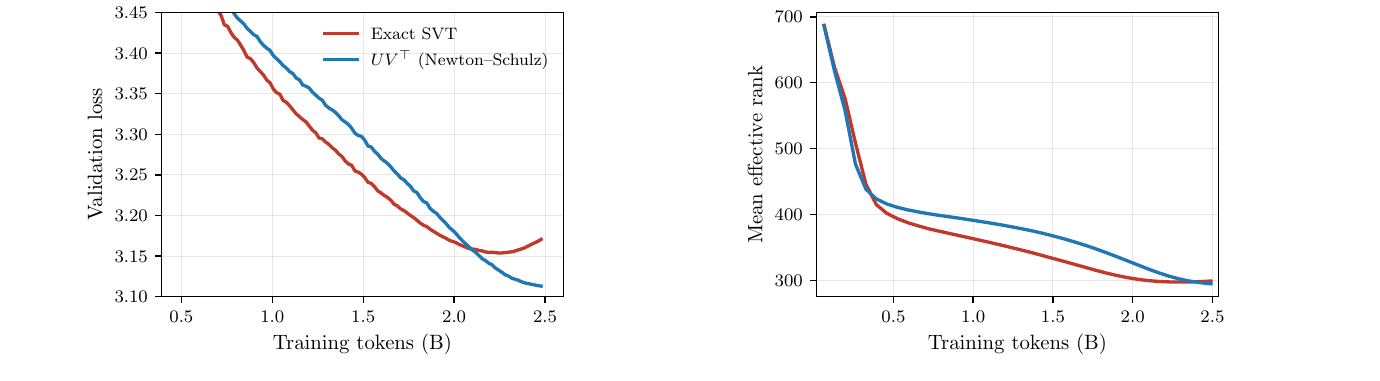}
  \caption{Exact singular-value thresholding and its Newton–Schulz $UV^\top$
  approximation at a matched spectral coefficient on a $124$M-parameter LLaMA
  model. Left: validation loss. Right: mean effective rank of the weight
  matrices. Both updates reach similar effective rank, while the exact update
  ends with higher validation loss.}
  \label{fig:svt_compare}
\end{figure}

To assess how this difference in the sub-threshold tail affects training,
Figure~\ref{fig:svt_compare} compares exact nuclear-norm proximal updates with
post-step spectral weight decay on a $124$M-parameter LLaMA trained on
FineWeb-Edu with $\lambda=1$. In this comparison,
spectral weight decay approximates the polar factor using Newton–Schulz
iterations, while the exact proximal update computes singular-value
thresholding from a full SVD at every step.

Both updates follow nearly identical effective-rank trajectories, but exact
thresholding does not improve validation loss. Its loss initially follows the
Newton–Schulz update, then rises late in training and ends higher. Applying the exact update once
every $s$ steps ($s=10$ to $50$) leaves these results unchanged. We therefore use the less
expensive Newton–Schulz update in the main text.

\subsection{Empirical comparison of pre- and post-step spectral weight decay}
\label{app:spectral_order_experiments}

Proposition~\ref{prop:spectral_order} establishes that the difference between pre-
and post-step spectral weight decay can exceed that of $\ell_2$ weight decay.
We examine the practical consequences of update order through controlled low-rank recovery
problems and language-model pretraining at increasing learning rates.

Both synthetic tasks recover a rank-$16$ matrix $W_\star\in\mathbb R^{128\times128}$
without imposing a rank constraint on the learned matrix. Their convex
data-fitting objectives and known recovery targets allow us to evaluate
update order without the nonconvexity of neural-network training.
We use $1{,}000$ full-batch gradient steps. We compare the exact partial polar
factor with its five-iteration Newton–Schulz approximation ($\mathrm{NS}_5$), both
computed in double precision.
At each noise level, we select $\lambda$ independently for each variant and
implementation on one tuning instance with separate noisy validation data.
We evaluate the final iterate using $e_s=\|W_{T,s}-W_{\star,s}\|_F^2/\|W_{\star,s}\|_F^2$ and plot
the mean error and percentage gain.
We include separately tuned pre- and post-step $\ell_2$ weight decay as a baseline.
Shaded bands show one sample standard deviation of the gain across the five evaluation seeds.

\begin{figure}[H]
  \centering
  \includegraphics[width=\textwidth]{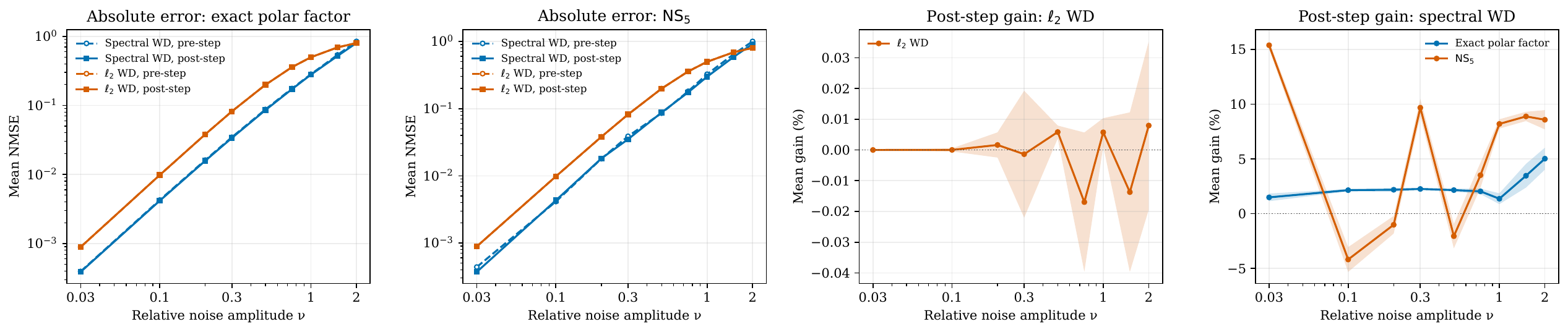}
  \caption{Pre/post-step weight decay in matrix denoising. From left to right:
  mean NMSE with the exact polar factor, mean NMSE with $\mathrm{NS}_5$, post-step gain
  for $\ell_2$ weight decay, and post-step gain for spectral weight decay.
  Absolute errors are averaged over five seeds. Dashed curves denote pre-step
  and solid curves post-step in the first two panels. Gain bands show
  $\pm$ one standard deviation. Positive gains favor post-step.}
  \label{fig:spectral_order_denoising}
\end{figure}

\paragraph{Low-rank recovery by matrix denoising.}
We first study recovery when all matrix entries are observed with noise,
minimizing $\tfrac12\|W-Y\|_F^2$ for
$Y=W_\star+\nu\|W_\star\|_F E/128$, with $E_{ij}\sim\mathcal N(0,1)$.
The nonzero teacher singular values decrease geometrically with a smallest-to-largest
ratio of $0.15$, testing whether regularization preserves weak signal directions
while suppressing noise.
Figure~\ref{fig:spectral_order_denoising} shows that exact post-step decay
reduces mean reconstruction error at all displayed noise levels, with gains
from $1.4\%$ to $5.0\%$. With $\mathrm{NS}_5$, post-step also performs better on average,
although its gains fluctuate more across noise levels. The pre/post-step difference
for $\ell_2$ weight decay remains below $0.02\%$,
negligible compared with the percentage-level gains under spectral weight decay.

\begin{figure}[t]
  \centering
  \includegraphics[width=\textwidth]{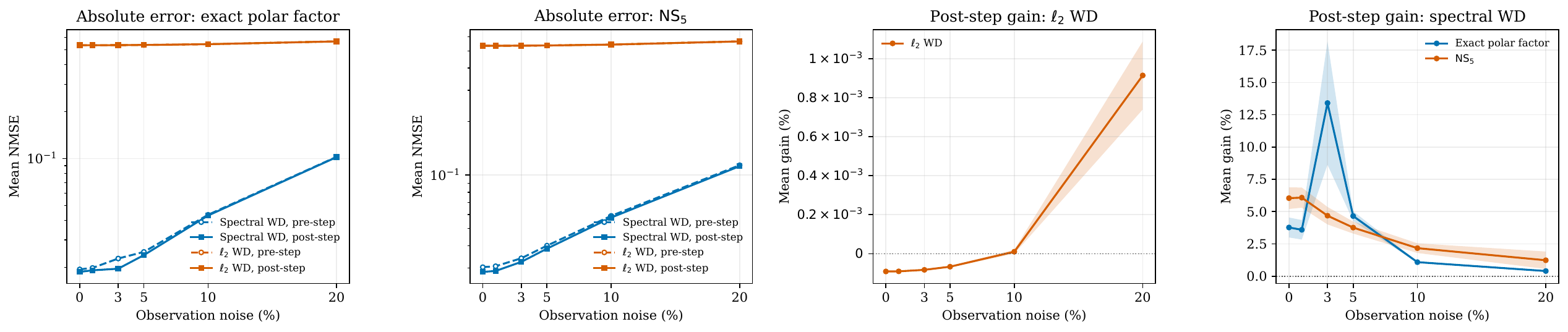}
  \caption{Pre/post-step weight decay in matrix sensing. Panels show mean
  NMSE with the exact polar factor, mean NMSE with $\mathrm{NS}_5$, post-step gain for
  $\ell_2$ weight decay, and post-step gain for spectral weight decay.
  Absolute errors are averaged over five seeds. Dashed curves denote pre-step
  and solid curves post-step in the first two panels. Gain bands show
  $\pm$ one sample standard deviation. Positive gains favor post-step.}
  \label{fig:spectral_order_sensing}
\end{figure}

\paragraph{Low-rank recovery from linear measurements.}
We next replace full noisy observations with $7{,}680$ Gaussian linear
measurements, twice the degrees of freedom
of a rank-$16$ matrix,
$y_i=\langle A_i,W_\star\rangle+\xi_i$, where
$(A_i)_{jk}\sim\mathcal N(0,1)$ and
$\xi_i\sim\mathcal N(0,\nu^2\|W_\star\|_F^2)$.
We minimize the least-squares measurement loss using a flatter teacher spectrum,
with sixteen nonzero singular values ranging linearly from $1$ to $0.6$.
All signal directions are therefore relatively strong, unlike in the denoising task.
Figure~\ref{fig:spectral_order_sensing} shows that post-step achieves lower mean
reconstruction error at every displayed noise level with both the exact polar
factor and $\mathrm{NS}_5$, with $\mathrm{NS}_5$ gains ranging from $1.2\%$ to $6.1\%$.
The $\ell_2$ baseline again differs only by tiny fractions of a percent
between the two update orders.

\paragraph{Update-order sensitivity in language-model pretraining.}
We next examine whether the two variants produce different training dynamics
with Adam on a
$124$M-parameter LLaMA trained on FineWeb-Edu at fixed $\lambda=1$
as the learning rate increases.
Figure~\ref{fig:spectral_order_llm} shows that at $\eta=0.005$, the trajectories
nearly coincide, whereas at $\eta=0.010$ they separate more clearly. Post-step
reaches a lower effective rank but a higher final validation loss at the
largest learning rate. This sweep therefore exposes sensitivity to update
order, but the accompanying change in rank prevents a clear preference
between the variants.

\begin{figure}[H]
  \centering
  \includegraphics[width=\textwidth]{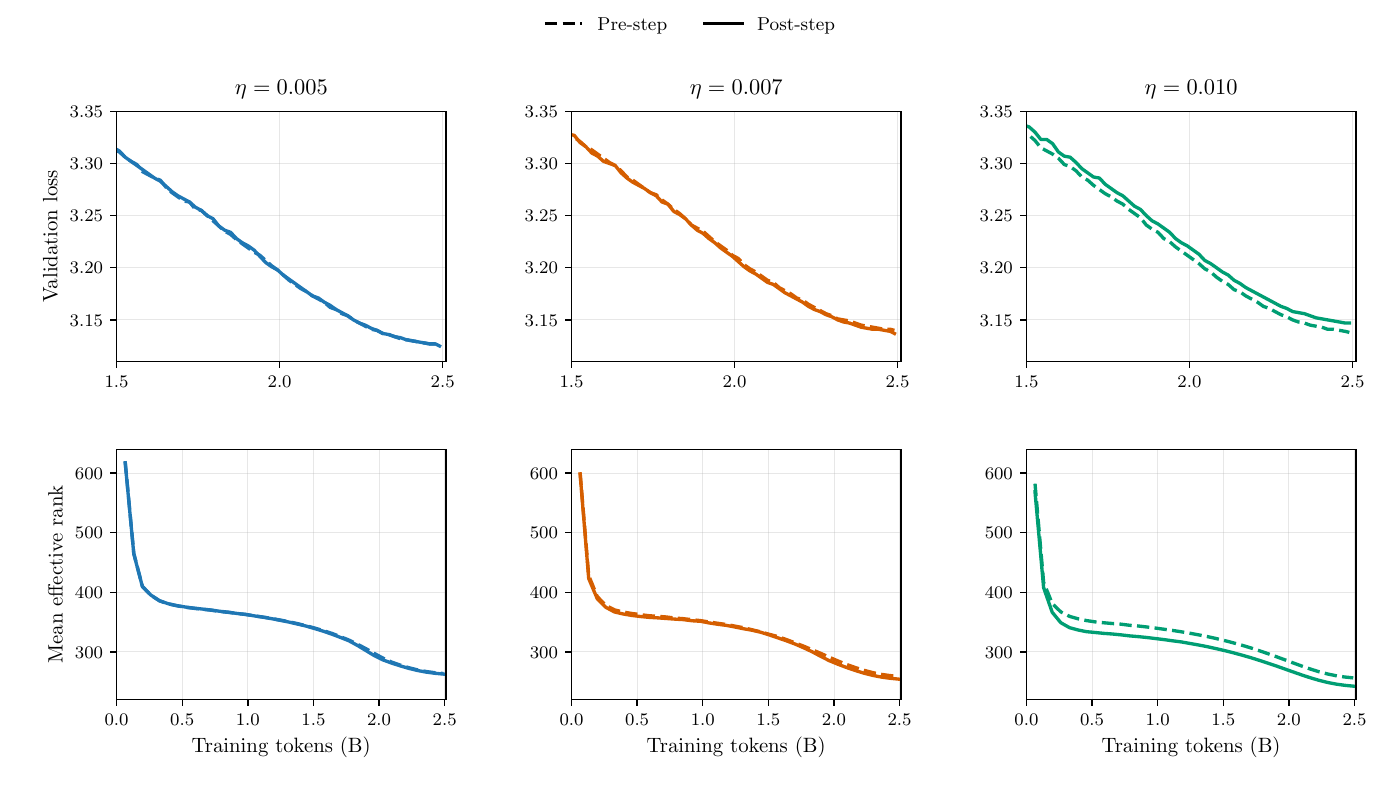}
  \caption{Pre/post-step spectral weight decay with Adam on a $124$M-parameter
  LLaMA trained on FineWeb-Edu at fixed $\lambda=1$. The top row shows validation loss after
  $1.5$B tokens, and the bottom row shows mean effective rank from the start
  of training. Dashed curves denote pre-step and solid curves post-step.}
  \label{fig:spectral_order_llm}
\end{figure}

Our LLM pretraining experiments show little difference between the two update
orders at small learning rates. The synthetic recovery tasks favor post-step
on average, with generally modest gains. Both schemes have the same
computational requirements, so we adopt post-step and recommend it as the default.

\section{Experimental Details and Additional Results}
\label{app:experiments}

\subsection{Language-model pretraining setup}
\label{app:pretraining_setup}

Table~\ref{tab:pretraining_setup} summarizes the main LLaMA pretraining
configurations. We train on FineWeb-Edu using GPT-2 tokenization, rotary
position embeddings, RMSNorm, and SwiGLU feed-forward layers. Input embeddings
and the output head share weights. All runs use sequences of $1{,}024$ tokens,
an effective batch of $128$ sequences, cosine learning-rate decay with
$2{,}000$ warmup steps, Adam betas $(0.9,0.95)$, gradient clipping at $0.5$,
and no dropout. Spectral decay acts on two-dimensional weight matrices using
five Newton-Schulz iterations, as in Muon. The higher-learning-rate comparison
in Appendix~\ref{app:spectral_order_experiments} varies the learning rate while
retaining the $124$M architecture and training horizon.

\begin{table}[ht]
  \centering
  \caption{LLaMA architectures and reference pretraining configurations. Token budgets are rounded.}
  \label{tab:pretraining_setup}
  \small
  \setlength{\tabcolsep}{6pt}
  \begin{tabular}{lrrr}
    \toprule
    Setting & $124$M & $257$M & $500$M \\
    \midrule
    Transformer blocks & $12$ & $16$ & $22$ \\
    Hidden dimension & $768$ & $1{,}024$ & $1{,}280$ \\
    Attention heads & $12$ & $16$ & $20$ \\
    Feed-forward dimension & $2{,}048$ & $2{,}816$ & $3{,}584$ \\
    Embedding vocabulary size & $50{,}304$ & $50{,}304$ & $50{,}304$ \\
    Training steps & $19{,}000$ & $39{,}000$ & $76{,}294$ \\
    Training tokens (B) & $2.49$ & $5.11$ & $10.00$ \\
    \bottomrule
  \end{tabular}
\end{table}

\subsection{Comparison with Prehab}
\label{app:prehab}

Prehab~\citep{qin2025low} adds a smooth rank surrogate over
activation-whitened weights to the task loss. We initialize both methods from
the same $124$M-parameter checkpoint, pretrained with standard $\ell_2$ weight
decay for one Chinchilla-optimal training budget~\citep{hoffmann2022empirical}.
We apply spectral weight decay without modification and tune only $\lambda$.
After $1{,}000$ steps, it reduces mean effective rank by $15.9\%$ with a
$2.46\%$ relative increase in validation loss. At comparable validation loss,
the rank reduction is much larger than with Prehab
(Figure~\ref{fig:swd_prehab}), although the loss cost is still too high for
practical compression.

\begin{figure}[H]
  \centering
  \includegraphics[width=\textwidth, height=0.2281\textwidth]{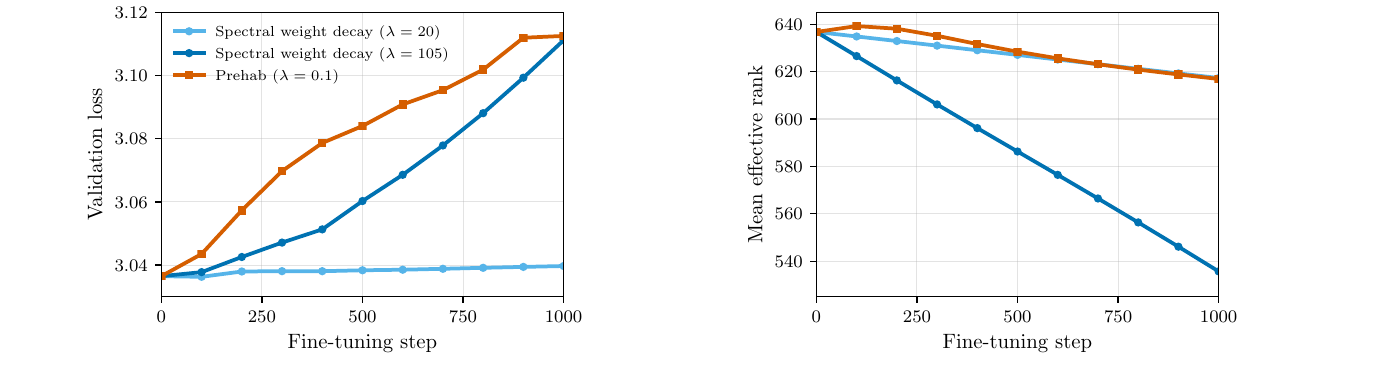}
  \caption{Fine-tuning trajectories from the same $124$M-parameter LLaMA
  checkpoint. At comparable validation loss, spectral weight decay reduces
  effective rank more than Prehab.}
  \label{fig:swd_prehab}
\end{figure}

\subsection{Comparison with Cuttlefish}
\label{app:alternative_methods}

Cuttlefish~\citep{wang2023cuttlefish} is a direct training-time alternative based on a simple idea: it
starts with full-rank training, waits for the stable ranks of the layers to
converge, then factorizes each layer at the corresponding rank and continues
training the low-rank factors. Like spectral weight decay, it modifies the
model during training rather than compressing a fixed checkpoint. Across all
four matched settings in Figure~\ref{fig:swd_cuttlefish}, spectral weight decay
reaches a substantially lower final effective rank at comparable validation
loss.

\begin{figure}[t]
  \centering
  \includegraphics[width=0.9\textwidth, height=0.38\textwidth]{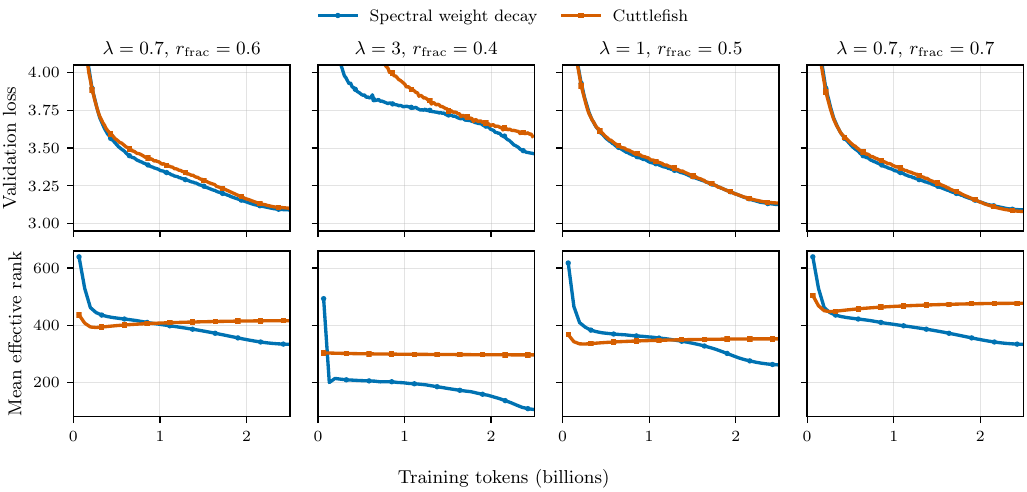}
  \caption{Training trajectories for spectral weight decay and Cuttlefish on
  $124$M-parameter LLaMA models trained on FineWeb-Edu. Each column compares
  the spectral coefficient $\lambda$ with the indicated Cuttlefish rank fraction
  $r_{\mathrm{frac}}$.}
  \label{fig:swd_cuttlefish}
\end{figure}

\subsection{Compression method and procedure}
\label{app:method_choice}

The compression results in Section~\ref{sec:experiments} use SVD-LLM. We compare
five singular-value-based compression methods on one checkpoint to select this
method. Table~\ref{tab:method_choice} applies truncated SVD,
SliceGPT~\citep{ashkboos2024slicegpt}, ASVD~\citep{yuan2023asvd},
SVD-LLM~\citep{wang2024svdllm}, and
Dobi-SVD~\citep{wang2025dobisvddifferentiablesvdllm} to a $124$M-parameter
spectral-weight-decay model with coefficient $1$. We also report accuracy on
ARC-Easy~\citep{clark2018think}, HellaSwag~\citep{zellers2019hellaswag}, and
PIQA~\citep{bisk2019piqa} after compression.

Truncated SVD reaches $1.43\times$ compression with a $+0.0359$ validation-loss
increase. SliceGPT produces a larger loss increase at a lower compression rate.
ASVD and SVD-LLM preserve validation loss better at comparable compression,
with increases of $+0.0286$ and $+0.0196$, respectively. Dobi-SVD reaches a
similar $+0.0207$ increase at $1.21\times$ compression. SVD-LLM gives the lowest
validation loss among the compressed models, so we use it throughout.

\begin{table}[ht]
  \centering
  \caption{Five compression methods applied to the $124$M
  spectral-weight-decay model ($\lambda=1$). Lower losses are better. Higher
  compression rates and downstream accuracies are better. Bold marks the best
  compressed value in each column.}
  \label{tab:method_choice}
  \small
  \setlength{\tabcolsep}{4pt}
  \begin{tabular}{lrrrrrr}
    \toprule
    Method & Compression & val loss & $\Delta$val loss & ARC-E & HellaSwag & PIQA \\
    \midrule
    baseline (uncompressed) & -- & $3.1129$ & -- & $0.4947$ & $0.3004$ & $0.6088$ \\
    \midrule
    truncated SVD & $1.43\times$ & $3.1488$ & $+0.0359$ & $0.4947$ & $0.2999$ & $0.6099$ \\
    SliceGPT      & $1.22\times$ & $4.6347$ & $+1.5218$ & $0.4140$ & $0.2815$ & $0.5680$ \\
    ASVD          & $\mathbf{1.45\times}$ & $3.1416$ & $+0.0286$ & $0.4877$ & $0.2998$ & $\mathbf{0.6121}$ \\
    \textbf{SVD-LLM} & $1.42\times$ & $\mathbf{3.1325}$ & $\mathbf{+0.0196}$ & $\mathbf{0.4982}$ & $0.2991$ & $0.6088$ \\
    Dobi-SVD      & $1.21\times$ & $3.1336$ & $+0.0207$ & $0.4912$ & $\mathbf{0.3005}$ & $0.6110$ \\
    \bottomrule
  \end{tabular}
\end{table}

We leave the token embeddings and output head uncompressed because truncating
them sharply degrades model quality. Spectral weight decay also has little
effect on their ranks (Figure~\ref{fig:erank_layers}, left). We choose a
separate truncation rank for each remaining weight matrix because effective
rank varies substantially across transformer blocks
(Figure~\ref{fig:erank_layers}, middle). After folding the singular values into
one factor, a rank-$r$ approximation of $W\in\mathbb{R}^{m\times n}$ stores
$r(m+n)$ parameters instead of $mn$. Decreasing $r$ therefore saves parameter
memory at the cost of higher validation loss.

\subsection{Noisy-label robustness results}
\label{app:label_noise_robustness}

Tables~\ref{tab:mnist_mlp_noise_ci}, \ref{tab:mnist_gru_noise_ci}, and
\ref{tab:bert_noise_ci} report final-checkpoint clean-test accuracy
as the mean $\pm$ one sample standard deviation, in percentage points,
over five label-corruption seeds.
Model initialization, training order, data splits, and regularization
coefficients are fixed. Standard deviations quantify variability
from label corruption, not from
model initialization or data splitting.

The MNIST MLP flattens each $28\times28$ image and applies four hidden
layers of width $2{,}048$ with ReLU activations, followed by a ten-class
linear head. The row-sequential GRU reads the image as $28$ vectors of
dimension $28$, uses two recurrent layers with hidden dimension $1{,}280$,
and applies a ten-class linear head to the final hidden state of the top
layer.

\begin{table}[ht]
  \centering
  \caption{MNIST MLP: final clean-test accuracy, mean $\pm$ one sample standard deviation.}
  \label{tab:mnist_mlp_noise_ci}
  \small
  \setlength{\tabcolsep}{6pt}
  \begin{tabular}{lrrrr}
    \toprule
    Method & $10\%$ noise & $25\%$ noise & $40\%$ noise & $60\%$ noise \\
    \midrule
    No WD & $87.99 \pm 0.93$ & $79.52 \pm 1.67$ & $65.29 \pm 3.27$ & $45.71 \pm 1.79$ \\
    $\ell_2$ WD & $86.89 \pm 1.70$ & $77.90 \pm 2.00$ & $67.12 \pm 2.71$ & $45.78 \pm 1.29$ \\
    Spectral WD (post-step) & $87.99 \pm 0.93$ & $81.22 \pm 2.36$ & $75.01 \pm 3.01$ & $63.56 \pm 1.04$ \\
    Spectral WD (pre-step) & $87.31 \pm 1.13$ & $81.63 \pm 1.77$ & $75.52 \pm 2.35$ & $61.63 \pm 2.74$ \\
    \bottomrule
  \end{tabular}
\end{table}

\begin{table}[ht]
  \centering
  \caption{MNIST GRU: final clean-test accuracy, mean $\pm$ one sample standard deviation.}
  \label{tab:mnist_gru_noise_ci}
  \small
  \setlength{\tabcolsep}{6pt}
  \begin{tabular}{lrrrr}
    \toprule
    Method & $10\%$ noise & $25\%$ noise & $40\%$ noise & $60\%$ noise \\
    \midrule
    No WD & $91.29 \pm 0.64$ & $82.42 \pm 1.18$ & $71.50 \pm 1.13$ & $51.65 \pm 2.12$ \\
    $\ell_2$ WD & $91.07 \pm 1.55$ & $82.98 \pm 0.54$ & $72.69 \pm 0.84$ & $51.20 \pm 2.78$ \\
    Spectral WD (post-step) & $94.06 \pm 1.03$ & $89.96 \pm 0.97$ & $81.38 \pm 3.15$ & $65.60 \pm 3.70$ \\
    Spectral WD (pre-step) & $94.83 \pm 0.62$ & $90.00 \pm 1.17$ & $80.02 \pm 2.03$ & $67.33 \pm 1.52$ \\
    \bottomrule
  \end{tabular}
\end{table}

\phantomsection
\label{app:bert_robustness_architecture}
We use BERT-base-uncased~\citep{devlin2019bert}, initialized from the
\texttt{google-bert/bert-base-uncased} checkpoint. The encoder has $12$
transformer blocks, each with $12$ attention heads, hidden dimension $768$,
and feed-forward dimension $3072$ with GELU activations. Its WordPiece
vocabulary contains $30{,}522$ tokens. The checkpoint supports $512$ positions,
but our inputs are truncated to $128$ tokens. Attention and hidden-layer
dropout probabilities are both $0.1$. A linear classifier maps the pooled
\texttt{[CLS]} representation to $4$, $14$, $10$, or $5$ classes for AG News,
DBpedia-14, Yahoo Answers, and Yelp Review Full, respectively.

The model has approximately $110$M parameters, of which $29$M are trainable.
We freeze the embeddings and encoder blocks $1$ through $8$, and train blocks
$9$ through $12$, the pooler, and the classifier. Both regularizers act on the
displacements from the pretrained checkpoint of the attention and feed-forward
weight matrices in the trainable blocks and the pooler weight matrix.
The classifier, biases, and LayerNorm parameters are not regularized.

Regularization coefficients are calibrated in separate tuning runs using
final clean-validation accuracy, independently at each noise level for MNIST
and at $60\%$ noise for BERT. The selected BERT coefficients are then fixed
across noise levels. All coefficients are frozen before the five new
label-corruption replications. Clean validation is not used to select
checkpoints, and the replication test results are not used to tune
coefficients. Each reported checkpoint is taken at the fixed training horizon.

\begin{table}[ht]
  \centering
  \caption{BERT-base: final clean-test accuracy, mean $\pm$ one sample standard deviation.}
  \label{tab:bert_noise_ci}
  \small
  \setlength{\tabcolsep}{5pt}
  \begin{tabular}{llrrrr}
    \toprule
    Dataset & Noise & No WD & $\ell_2$-SP & \shortstack{Spectral WD\\(post-step)} & \shortstack{Spectral WD\\(pre-step)} \\
    \midrule
    AG News & $10\%$ & $88.49 \pm 0.10$ & $89.63 \pm 0.03$ & $90.57 \pm 0.10$ & $90.55 \pm 0.11$ \\
     & $25\%$ & $79.23 \pm 0.47$ & $88.60 \pm 0.21$ & $89.77 \pm 0.11$ & $89.76 \pm 0.13$ \\
     & $40\%$ & $66.02 \pm 0.50$ & $86.65 \pm 0.55$ & $88.56 \pm 0.33$ & $88.61 \pm 0.31$ \\
     & $60\%$ & $44.05 \pm 0.85$ & $78.40 \pm 1.91$ & $81.07 \pm 1.14$ & $81.08 \pm 1.10$ \\
    \midrule
    DBpedia-14 & $10\%$ & $97.97 \pm 0.11$ & $98.72 \pm 0.01$ & $98.86 \pm 0.03$ & $98.85 \pm 0.04$ \\
     & $25\%$ & $95.19 \pm 0.18$ & $98.52 \pm 0.13$ & $98.72 \pm 0.10$ & $98.72 \pm 0.10$ \\
     & $40\%$ & $90.43 \pm 0.38$ & $98.10 \pm 0.25$ & $98.55 \pm 0.21$ & $98.53 \pm 0.20$ \\
     & $60\%$ & $73.74 \pm 1.05$ & $96.65 \pm 0.36$ & $97.88 \pm 0.34$ & $97.83 \pm 0.30$ \\
    \midrule
    Yahoo Answers & $10\%$ & $66.86 \pm 0.23$ & $69.06 \pm 0.14$ & $70.94 \pm 0.09$ & $70.95 \pm 0.12$ \\
     & $25\%$ & $61.99 \pm 0.17$ & $68.01 \pm 0.12$ & $70.25 \pm 0.14$ & $70.24 \pm 0.13$ \\
     & $40\%$ & $54.71 \pm 0.15$ & $66.52 \pm 0.33$ & $69.18 \pm 0.22$ & $69.17 \pm 0.19$ \\
     & $60\%$ & $40.95 \pm 0.88$ & $61.34 \pm 0.37$ & $65.93 \pm 0.53$ & $65.92 \pm 0.52$ \\
    \midrule
    Yelp Review Full & $10\%$ & $51.93 \pm 0.10$ & $52.92 \pm 0.14$ & $54.42 \pm 0.12$ & $54.41 \pm 0.15$ \\
     & $25\%$ & $47.12 \pm 0.16$ & $51.44 \pm 0.11$ & $53.08 \pm 0.20$ & $53.15 \pm 0.25$ \\
     & $40\%$ & $40.58 \pm 0.48$ & $48.62 \pm 0.70$ & $50.91 \pm 0.46$ & $50.92 \pm 0.51$ \\
     & $60\%$ & $29.76 \pm 0.97$ & $38.62 \pm 1.50$ & $42.78 \pm 2.39$ & $42.89 \pm 2.37$ \\
    \bottomrule
  \end{tabular}
\end{table}


\end{appendixpart}
\end{document}